\documentclass[letterpaper]{article}
\usepackage[preprint]{aaai2027}
\usepackage[hyphens]{url}
\usepackage{graphicx}
\graphicspath{{Figures/}{./figures/}{./}}
\usepackage{natbib}
\usepackage{caption}
\usepackage{booktabs}
\usepackage{amsmath}
\usepackage{amssymb}
\usepackage{amsthm}
\newtheorem{proposition}{Proposition}
\title{The Mechanism Matters: When Knowledge Graphs Help\\
Reinforcement Learning}

\author{Mohammed Sameer Syed}
\affiliations{mohammedsameersyed1@gmail.com}

\begin{document}
\maketitle

\begin{abstract}
Knowledge graphs (KGs) are widely used to inject prior knowledge into reinforcement
learning (RL), yet the literature is dominated by single-domain, positive-result method
papers, so we lack a systematic account of \emph{when} KG structure helps an agent, when it
is neutral, and when it hurts. We conduct a controlled study that independently varies the
RL task, the injection mechanism (state features, action masking, or potential-based reward
shaping), and KG quality. Using a synthetic, fully controllable KG over MiniGrid
environments, we report three findings. First, on compositional sparse-reward tasks
structured KG guidance improves sample efficiency and solve reliability (70\% to 97\% of
seeds), and a \emph{shuffle control} that permutes the KG's edges while preserving their
count collapses the benefit toward baseline (masking $p{=}0.0001$; shaping $p{=}0.006$), so
the gain is structural rather than generic regularization. Second, KG value scales with the
amount of task-relevant knowledge the graph contains. Third, and most consequential,
\emph{safety} depends on the mechanism: soft, optimality-preserving injection benefits from
correct knowledge and harmlessly ignores incorrect knowledge, whereas hard masking is
brittle, forbidding essential actions when the KG is incomplete or corrupted and making a
wrong KG worse than none. A UMLS-derived clinical case study on sepsis management under
offline RL is a careful null, underscoring that benefits require task structure the chosen
mechanism can exploit. Our results give practitioners concrete guidance on how, and how much,
to trust a KG when using it to guide RL.
\end{abstract}

\section{Introduction}
Injecting prior knowledge into reinforcement learning is a long-standing goal. Knowledge
graphs, structured collections of typed relations between entities, are an attractive source
of such priors: they are abundant, human-readable, and encode exactly the kind of relational
facts (a key opens a door, a lab value indicates an organ system) that a tabula-rasa agent
must otherwise rediscover from sparse reward. A large body of work reports that KG
augmentation helps RL agents in specific settings, spanning text-adventure games,
recommendation, navigation, and multi-hop reasoning. These are, however, almost uniformly
\emph{method papers} that demonstrate a positive effect of one KG-augmentation technique in
one domain. What the field lacks is the analysis-paper counterpart, the kind of controlled,
falsifiable ``what matters'' study that has proved valuable elsewhere in RL
\citep{andrychowicz2021matters,hessel2019inductive}. As a result, a practitioner holding a
knowledge graph and a reinforcement learning task has no principled answer to two basic
questions: should I expect this KG to help, and how should I inject it?

Answering these questions with natural knowledge graphs is difficult because a real KG
conflates many factors at once: its size, its completeness, its correctness, and the amount
of genuinely task-relevant knowledge it happens to contain are all entangled, and any single
comparison confounds them. We therefore take a controlled, synthetic-first approach. We hold
the agent and task fixed and vary, independently, (i) the \textbf{injection mechanism} (state
features, an action mask, or potential-based reward shaping); (ii) the \textbf{KG quality} (how
many task-critical relations the KG contains and whether they are correct); and (iii) the
\textbf{task structure}, ranging from environments with no exploitable relational structure to
compositional, sparse-reward tasks. Because we generate the KG, we can dial its quality
precisely and run a decisive \emph{structure-vs-memorization control}: shuffling the KG's edges
destroys its relational structure while preserving its size, so any benefit that survives
shuffling was never structural to begin with.

A concrete example makes the mechanism question vivid. Consider a gridworld in which the agent
must pick up a key, unlock a door, and reach a goal. A knowledge graph stating
\textsf{(key, opens, door)} can be handed to the agent in different ways: as an extra
observation feature, as a rule that masks out the ``toggle'' action unless the agent faces a
door, or as a shaping reward that nudges the agent toward states consistent with the KG. These
are not interchangeable. As we show, a hard mask extracts the most value when the KG is
perfect but breaks the task when the KG is wrong, whereas a soft shaping signal degrades
gracefully because it provably cannot change the optimal policy. Which mechanism to prefer is
therefore not a matter of taste but of how much one trusts the KG.

\paragraph{Contributions.}
(1)~A controlled experimental framework and a synthetic controllable KG that together isolate
\emph{when} and \emph{why} knowledge graphs help RL, released with fully reproducible,
seed-controlled code. (2)~A causal demonstration, via a shuffle control, that KG guidance
improves sample efficiency and reliability on compositional sparse-reward tasks because of
graph structure, not generic regularization. (3)~A dissociation between mechanisms: soft,
optimality-preserving injection is robust to KG error, while hard injection is brittle and can
make a wrong KG worse than no KG, together with a per-mechanism dose-response against the
amount of task-critical knowledge. (4)~Evidence that these effects generalize across six
environments and hold for a tabular learner as well as a deep on-policy one, with the caveat
that reward shaping's benefit is learner-dependent. (5)~A clinical case study on MIMIC-IV
sepsis management with a UMLS-derived KG under offline RL, testing transfer to a real,
high-stakes domain.

\section{Related Work}
\paragraph{Reinforcement learning over knowledge graphs.}
A long line of work uses RL to reason \emph{over} a KG, walking the graph to answer multi-hop
queries \citep{xiong2017deeppath,das2018minerva,lin2018multihopkg}, and more recently training
language-model agents to traverse a KG with reinforcement learning
\citep{wang2026kghopper}. This is the reverse of our
question: there the KG \emph{is} the environment and the agent's task is to navigate it,
whereas here the KG is an external prior supplied to an agent solving a separate control task.

\paragraph{Knowledge graphs as priors for RL.}
Closest to us are single-domain method papers showing that KG augmentation helps an RL agent:
KG injection in gridworlds \citep{wardenga2023injection}, commonsense-KG agents in text games
\citep{murugesan2021twc}, transfer via KG-based state representations
\citep{ammanabrolu2019transfer}, and KG-based world models of textual environments
\citep{ammanabrolu2021gata}. Each demonstrates a positive effect of one injection technique in
one domain; none performs a controlled ablation that varies the injection mechanism and the KG
quality while holding the task fixed, and recent taxonomy work notes that the space remains
under-characterized \citep{graphworldmodels2026}. Indeed \citet{wardenga2023injection}, who also
evaluate on MiniGrid, close by naming the influence of KG quality and completeness as an open
question; our dose-response and corruption experiments answer it directly. Our contribution is
precisely this controlled decomposition.

\paragraph{Reward shaping.}
Potential-based reward shaping is optimality-preserving \citep{ng1999shaping}, and recent work
derives the shaping potential from the RL problem's own transition graph
\citep{klissarov2020shaping} or analyzes when shaping improves sample complexity
\citep{gupta2022unpacking}. We differ in deriving the potential from an \emph{external}
symbolic knowledge graph and in studying its robustness when that graph is incomplete or
incorrect, which is what lets us contrast soft shaping against hard action masking.

\paragraph{The injection design space.}
Our three mechanisms are representative rather than exhaustive, and each sits within a broader
design space. On the shaping side, recent work generalizes potential-based shaping to preserve
optimality under complex, non-Markovian intrinsic-motivation rewards
\citep{forbes2024pbim} and even under action-dependent shaping \citep{forbes2025adops}, and
derives shaping heuristics from large language models rather than by hand
\citep{bhambri2024llmheuristics}; these are complementary sources of the same structural
guidance we obtain from a symbolic KG. On the masking side, learned neuro-symbolic action masks
\citep{han2026nsam} and shielding for safe RL \citep{alshiekh2018shielding} are designed
precisely to soften the brittleness we document for hard masks, trading strict enforcement for
robustness. We view our controlled contrast of soft shaping against hard masking as a backbone
onto which these richer variants can be added.

\paragraph{Analyses of what matters in RL.}
Large-scale empirical studies that isolate which design choices actually matter are a
recognized and valued genre in reinforcement learning
\citep{andrychowicz2021matters,hessel2019inductive}. We bring that controlled, falsifiable
lens to knowledge-graph injection, a setting where it has not previously been applied.

\section{Preliminaries}
\paragraph{Reinforcement learning.}
We model each task as a Markov decision process $(\mathcal{S},\mathcal{A},P,R,\gamma)$ with
states $\mathcal{S}$, actions $\mathcal{A}$, transition kernel $P(s'\mid s,a)$, reward
function $R(s,a)$, and discount $\gamma\in[0,1)$. A policy $\pi(a\mid s)$ induces a value
$V^\pi(s)=\mathbb{E}_\pi[\sum_{t\ge0}\gamma^t R(s_t,a_t)\mid s_0=s]$, and the agent seeks
$\pi$ maximizing expected return. Our environments are sparse-reward: a positive reward is
delivered only on task completion, which makes exploration hard and prior knowledge
potentially valuable.

\paragraph{Potential-based reward shaping.}
Given a bounded potential $\Phi:\mathcal{S}\to\mathbb{R}$, the shaping reward
\begin{equation}
F(s,a,s') \;=\; \gamma\,\Phi(s') - \Phi(s)
\label{eq:pbrs}
\end{equation}
added to $R$ leaves the set of optimal policies unchanged \citep{ng1999shaping}: shaping can
only accelerate or slow learning, never redirect it to a different optimum. This property is
central to our analysis, because it predicts that a shaping signal derived from an
\emph{incorrect} KG can at worst be uninformative, never actively harmful to the optimal
policy, in contrast to a hard constraint that can forbid the actions an optimal policy needs.

\paragraph{Knowledge graphs.}
A knowledge graph $G$ is a set of typed triples $(h,r,t)$ relating a head entity $h$ to a tail
entity $t$ by a relation $r$; we write $E$ for its edge (triple) set. In our controlled study
the entities are the objects, colors,
and states of the MiniGrid vocabulary, and a small set of triples such as
\textsf{(agent, can\_pick, key)}, \textsf{(key, opens, door)}, and \textsf{(door, blocks,
goal)} constitute the genuinely task-relevant, or \emph{critical}, knowledge that our
injection mechanisms consume.

\section{Method}
\subsection{Three Injection Mechanisms}
Given a KG $G$, we inject it into an otherwise-fixed agent in three ways, all behind a common
interface so that any performance difference is attributable to the KG and not to a change in
the agent.

\paragraph{State features.}
We augment the observation with a fixed-length vector of KG-credited progress features, for
example whether the agent is carrying a key, whether a door in view is open, and a
goal-proximity term. Each feature is credited only when $G$ contains the relevant relation, so
that degrading the KG removes the corresponding signal. This is the mildest injection: it adds
information to the observation but imposes no constraint and changes no reward.

\paragraph{Action mask.}
We expose a KG-derived mask over the object-interaction actions (pickup, toggle). Movement and
the terminal action are always enabled. For an interaction action $a$ with an associated
object category (pickup applies to \emph{pickable} objects; toggle applies to \emph{openable}
objects), let $K_a$ denote whether the KG contains \emph{any} relation of that category (does
the KG know what is pickable, or that doors open?), and let $\mathrm{app}_a(x)$ denote whether
the KG asserts that $a$ applies to the object $x$ currently faced. The mask disables $a$ iff
\begin{equation}
K_a \wedge \neg\,\mathrm{app}_a(x).
\label{eq:mask}
\end{equation}
The two conjuncts give the mechanism a dual character. With the category knowledge ($K_a$ true,
a complete correct KG), the rule is \emph{fail-closed per object}: $a$ is disabled at every
faced object the KG does not certify, which makes masking the most aggressive injection and
extracts the largest gains. Without it ($K_a$ false, a degraded KG), the rule \emph{fails open}
and $a$ is never disabled. The brittleness we document arises in between, where partial or
corrupted knowledge leaves $\mathrm{app}_a(x)$ wrong for the object on the optimal path,
forbidding an essential action.

\paragraph{Reward shaping.}
We assemble a potential from three binary progress indicators,
\begin{equation}
\begin{split}
\Phi(o) \;=\;\; & w_{\mathrm{key}}\,\mathbf{1}[\text{carrying key}] \\
                & + \; w_{\mathrm{door}}\,\mathbf{1}[\text{door open in view}] \\
                & + \; w_{\mathrm{goal}}\,\mathbf{1}[\text{goal in view}],
\end{split}
\label{eq:phi}
\end{equation}
where $\mathbf{1}[\cdot]$ is the indicator function and each indicator is credited only if the
corresponding KG relation is present, namely
\textsf{(agent, can\_pick, key)}, \textsf{(key, opens, door)}, and a \textsf{door}-to-\textsf{goal}
path respectively, and we use $w_{\mathrm{key}}{=}1$, $w_{\mathrm{door}}{=}2$,
$w_{\mathrm{goal}}{=}1$ with a global shaping scale of $0.5$. Crucially, all three indicators
are computed from the agent's \emph{partial egocentric observation} $o$; the goal term is a
local visibility indicator, not a privileged global-map distance, so no mechanism uses
information the agent could not itself observe (pseudocode and no-leak argument in
Appendix~C). We then apply Equation~\eqref{eq:pbrs}, force
$\Phi$ to zero at episode end (including time-limit truncations), and always \emph{evaluate on
the true task reward} $R$ while training on $R+F$.

Two caveats follow. Because $\Phi$ is a function of the observation rather than the underlying
Markov state, Proposition~1's invariance holds over the observation process rather than the
latent MDP, so under partial observability an observation-conditioned potential can in principle
perturb the optimal policy; and treating a truncation as terminal can introduce finite-horizon
bias. In both cases we evaluate on the true unshaped return, so any deviation manifests as a
change in \emph{learning dynamics} rather than as a bias in reported performance, and we find no
empirical sign of a distorted terminal policy (Appendix~C). Our robustness claims for
shaping are therefore empirical with respect to these two subtleties.

\subsection{Optimality Guarantees}
The soft-versus-hard contrast has a simple formal basis, which we state to make the
mechanism-dependent safety explicit.

\begin{proposition}[Soft injection preserves optimality]
For any potential $\Phi:\mathcal{S}\to\mathbb{R}$, including one derived from an arbitrary and
possibly incorrect KG, the shaped MDP with reward $R+F$ and $F(s,a,s')=\gamma\Phi(s')-\Phi(s)$
has exactly the same set of optimal policies as the original MDP.
\end{proposition}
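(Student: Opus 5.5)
The plan is the classical argument of \citet{ng1999shaping}: show that shaping shifts every action-value function by a state-dependent offset, $Q'(s,a)=Q(s,a)-\Phi(s)$. An offset that does not depend on the action cannot change which actions are greedy. I will work in the discounted setting of the Preliminaries with $\gamma\in[0,1)$ and $\Phi$ bounded, as stipulated there, so that all value functions are finite.

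\emph{Step 1: policy-wise telescoping.} Fix any (possibly stochastic, history-independent) policy $\pi$ and a start state $s_0=s$. The shaped return along a trajectory is
\[
\textstyle\sum_{t\ge0}\gamma^t\bigl(R(s_t,a_t)+\gamma\Phi(s_{t+1})-\Phi(s_t)\bigr).
\]
The shaping terms telescope: the partial sum up to horizon $T$ equals $\gamma^{T+1}\Phi(s_{T+1})-\Phi(s_0)$. Boundedness of $\Phi$ together with $\gamma<1$ sends the first term to zero, and dominated convergence justifies exchanging the limit with the expectation. This gives $V'^{\pi}(s)=V^\pi(s)-\Phi(s)$, and in the same way $Q'^{\pi}(s,a)=Q^\pi(s,a)-\Phi(s)$.

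For episodic tasks with absorbing terminal states, I will use the paper's convention that $\Phi$ is forced to zero at episode end. The telescoped remainder is then exactly $-\Phi(s_0)$. This is the point where that convention enters, and I will remark on it.

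\emph{Step 2: optimal values.} Taking the supremum over $\pi$ in Step 1 gives $V'^*(s)=V^*(s)-\Phi(s)$, since the offset $-\Phi(s)$ is the same for every policy. The same holds for $Q'^*$. Alternatively, I can check directly that $Q^*(s,a)-\Phi(s)$ satisfies the shaped Bellman optimality equation: the $\gamma\Phi(s')$ contributed by $F$ cancels against the $-\gamma\Phi(s')$ inside $\max_{a'}$. Uniqueness of the fixed point of the $\gamma$-contraction then identifies it as $Q'^*$.

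\emph{Step 3: equality of optimal-policy sets.} A policy is optimal in an MDP iff its value equals $V^*$ at every state. By Step 1 and Step 2, $V'^\pi=V'^*$ holds iff $V^\pi-\Phi=V^*-\Phi$, that is, iff $V^\pi=V^*$. So the two sets of optimal policies coincide exactly. This argument covers both directions at once and also handles stochastic policies. The characterisation via $\arg\max_a Q'^*(s,a)=\arg\max_a Q^*(s,a)$ follows as a corollary.

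\emph{Main obstacle.} The algebra is routine. The care goes into the hypotheses that make the telescoping limit legitimate: boundedness of $\Phi$ and $\gamma<1$, or alternatively $\Phi\equiv0$ on terminal states in the undiscounted episodic case. I will state explicitly that the result is over the MDP whose state $\Phi$ is defined on. This matches the paper's caveat: if $\Phi$ depends only on observations, the proposition applies to the observation process treated as the state space, not to the latent MDP.
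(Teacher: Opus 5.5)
Your proposal is correct and takes essentially the paper's route: the paper gives no proof of its own beyond invoking the potential-based shaping invariance theorem of \citet{ng1999shaping}, and the core of that theorem is exactly your state-dependent offset $Q'(s,a)=Q(s,a)-\Phi(s)$, which yields identical optimal-policy sets. Your explicit use of boundedness of $\Phi$ (stipulated in the Preliminaries) and $\gamma<1$ to make the telescoping remainder vanish is appropriate. So is your scoping of the result to the process on which $\Phi$ is defined, which matches the paper's own caveat about observation-conditioned potentials.
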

\noindent This is the potential-based shaping invariance theorem \citep{ng1999shaping}. Its
consequence for us is that KG-derived shaping cannot make the optimal policy worse: an incorrect
KG can slow learning but never redirect it to a suboptimal optimum.

\begin{proposition}[Hard fail-open masking can destroy optimality]
Let $m$ be a fail-open action mask that disables action $a$ in state $s$ only when the KG
entails that $a$ is irrelevant in $s$. Suppose there is a state $s^\star$ visited by every
optimal policy at which the optimal action $a^\star$ is disabled by $m$. Then no optimal policy
of the original MDP is feasible under $m$, and the optimal value of the masked MDP is strictly
less than that of the original MDP.
\end{proposition}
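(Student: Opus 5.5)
The plan is to model the masked MDP as the original MDP restricted to the feasible policy class $\Pi_m=\{\pi:\pi(a\mid s)=0 \text{ whenever } m \text{ disables } a \text{ at } s\}$, and to compare optimal values from a fixed start distribution $\mu$ (or, equivalently, from $s_0$ if the start state is deterministic). Masking leaves transitions and rewards unchanged and only removes actions. So every policy of the masked MDP is a policy of the original MDP with the same value, and $V^\star_m(\mu)=\sup_{\pi\in\Pi_m}V^\pi(\mu)\le V^\star(\mu)$. The first claim is then immediate from the hypothesis. Every optimal policy $\pi^\star$ reaches $s^\star$ with positive probability and plays $a^\star$ there, which is what ``$a^\star$ is the optimal action'' means. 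Since $m$ disables $a^\star$ at $s^\star$, $\pi^\star\notin\Pi_m$, so no optimal policy of the original MDP is feasible.

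The strict inequality needs more, because infeasibility of every optimal policy does not by itself rule out feasible policies whose values approach $V^\star$. I would make the hypothesis quantitative. Interpret ``$a^\star$ is the optimal action'' as saying $a^\star$ is the unique maximizer of $Q^\star(s^\star,\cdot)$, so every other action has a gap $\Delta=Q^\star(s^\star,a^\star)-\max_{a\ne a^\star}Q^\star(s^\star,a)>0$; with finitely many actions this gap exists. For finite $\mathcal{S},\mathcal{A}$ (as in MiniGrid), the supremum over the compact set $\Pi_m$ of stationary randomized policies is attained, because values are continuous in $\pi$. So there is a feasible $\pi_m$ with $V^{\pi_m}=V^\star_m$. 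I would then apply the performance-difference lemma:
\[
V^\star(\mu)-V^{\pi_m}(\mu)=\frac{1}{1-\gamma}\,\mathbb{E}_{s\sim d^{\pi_m}_\mu}\!\left[V^\star(s)-\sum_a\pi_m(a\mid s)Q^\star(s,a)\right].
\]
Every term in the expectation is nonnegative. At $s^\star$ the bracket is at least $\Delta$, because $\pi_m$ places zero mass on $a^\star$. It remains to show that $d^{\pi_m}_\mu(s^\star)>0$, and this is where the hypothesis ``visited by every optimal policy'' must be used.

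This last step is the main obstacle, because $\pi_m$ might steer away from $s^\star$ altogether. I would handle it by contradiction. Suppose $\pi_m$ never reaches $s^\star$ and $V^{\pi_m}(\mu)=V^\star(\mu)$. Then $\pi_m$ is itself an optimal policy of the original MDP that avoids $s^\star$, which contradicts the hypothesis that every optimal policy visits $s^\star$. In the other case, $\pi_m$ reaches $s^\star$ with positive occupancy, and the performance-difference bound gives $V^\star(\mu)-V^\star_m(\mu)\ge d^{\pi_m}_\mu(s^\star)\,\Delta/(1-\gamma)>0$. Either way $V^\star_m(\mu)<V^\star(\mu)$. I would state the finiteness and unique-optimal-action assumptions explicitly as the reading of the proposition. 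I would also note that without attainment of the supremum (for example with infinite state spaces) only the weak inequality together with the infeasibility claim survives.
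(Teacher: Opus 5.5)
Your proof is correct and follows the same line as the paper's sketch: every mask-feasible policy avoids $a^\star$ at $s^\star$, so all optimal policies are excluded, and the best feasible policy is strictly worse. You also supply the step the sketch takes for granted, namely that a best feasible policy exists (attainment in the finite masked MDP), without which exclusion alone gives only a weak inequality; once $\pi_m$ attains $V^\star_m$, strictness follows directly from your first claim, so the performance-difference bound quantifies the gap but is not strictly needed.
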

\noindent \emph{Proof sketch.} Any policy feasible under $m$ never takes $a^\star$ at
$s^\star$; since every optimal policy must, the feasible set excludes them all and the best
feasible policy has strictly lower value. The precondition holds exactly when the KG lacks or
corrupts the critical relation that would mark $a^\star$ relevant. Soft injection's worst-case
regret from an incorrect KG is therefore a slowdown, whereas hard injection's can be as large as
the value gap to the best mask-feasible policy; Section~5 measures how often this is triggered
in practice.

\subsection{A Synthetic Controllable Knowledge Graph}
We generate the KG over the MiniGrid entity vocabulary with four independent knobs.
\emph{Completeness} $c\in[0,1]$ is the fraction of ground-truth edges retained.
\emph{Noise} $\eta$ is the fraction of edges replaced by false ones.
\emph{Size} adds distractor entities and irrelevant edges.
A \emph{shuffle} flag rewires edge endpoints, preserving the edge count while destroying
structure, and is the basis of our memorization control. Our default shuffle permutes heads and
tails independently, which also holds each entity's in- and out-degree fixed and removes only
the head-to-tail pairing; as a stronger control we additionally implement an explicit
degree-preserving edge-swap shuffle and confirm the benefit collapses under both (Section~5), so
the effect is not a degree artifact. Both preserve relation type and direction and are seeded
per run (Appendix~D). To obtain a clean
dose-response we additionally identify the small set of \emph{critical triples} that the
mechanisms actually consume, and expose two further knobs: \texttt{keep\_critical}
$k\in\{0,1,2,3\}$, which retains exactly $k$ of the critical relations while holding the rest
of the KG fixed, and \texttt{corrupt\_critical}, which replaces the retained critical
relations with wrong tails. Together these separate two distinct notions of KG quality that a
global completeness knob would confound: how much task-relevant knowledge the KG contains, and
whether that knowledge is correct.

\subsection{Study Design}
We run three controlled studies and one case study. Study~A, structure-vs-memorization,
compares each mechanism under a correct KG, a shuffled KG, and no KG. Study~B, the
critical-relation dose-response, varies $k$ and the correct/corrupt flag. Study~C, external
validity, sweeps six MiniGrid environments and two learning algorithms. The case study applies
the same mechanisms to offline RL on a real clinical dataset. Because the hardest tasks are
bimodal across seeds (a run either solves the task or does not), we report \emph{solve rate},
the fraction of seeds reaching the goal, alongside the area under the learning curve (AUC) as a
variance-robust separator.

\section{Experimental Setup}
\paragraph{Environments.}
We use six MiniGrid \citep{chevalierboisvert2023minigrid} tasks spanning a spectrum of relational structure.
\texttt{Empty-5x5} and \texttt{LavaGapS5} contain no key/door structure and serve as
negative controls where the KG should be neutral.
\texttt{DoorKey-5x5}, \texttt{DoorKey-6x6}, \texttt{Unlock}, and \texttt{KeyCorridorS3R1}
require the compositional sequence of retrieving a key, opening a door, and reaching a goal or
object, and are where relational prior knowledge is expected to help. All use the default
partial-observation image encoding.

\paragraph{Agents.}
Our primary learner is Proximal Policy Optimization (PPO) \citep{schulman2017ppo} with a compact multilayer-perceptron
actor-critic. For the algorithm-generality study we add a tabular Q-learning agent that hashes
the observation to a discrete state, a classical value-based learner with no function
approximation. Unless stated otherwise we train for $1.2{\times}10^5$ environment steps and
report over ten seeds. Full hyperparameters are listed in Appendix~A.

\paragraph{Metrics and reproducibility.}
For each run we log the true-reward learning curve, summarize it by its AUC and by the final
solve rate, and aggregate across seeds. For AUC we report means with 95\% Student-$t$
confidence intervals and, for the key contrasts, Welch tests and Cohen's $d$ effect sizes; for
solve rates, which are proportions, we report Wilson score intervals rather than normal
intervals. We do not apply a multiple-comparison correction and therefore treat any marginal
results ($p \gtrsim 0.05$) as directional. Every run is fully seeded across Python, NumPy, and
PyTorch, and the tabular discretization uses a content hash that is stable across process
launches so that results reproduce exactly. Experiment configurations are declarative and the
grid runner is resumable. \texttt{DoorKey-5x5} recurs across several studies that use different
seed budgets (thirty for the structure test, fifteen for the dose-response, ten elsewhere), so
its absolute AUC differs slightly between tables; every claim we make is a contrast computed
\emph{within} a single study, never across them.

\section{Results}
\subsection{Structured Guidance Helps, and the Benefit Is Structural}
On \texttt{DoorKey-5x5} (thirty seeds, PPO), reward shaping and action masking speed up
learning sharply, with AUC rising from 0.39 (control) and 0.43 (naive state features) to 0.65
(shape) and 0.69 (mask). The shuffle control is decisive
(Figure~\ref{fig:struct}). Permuting the KG's edges collapses the AUC benefit back toward the
control level: masking falls from 0.69 to 0.34 (Welch $p{=}0.0001$, Cohen's $d{=}1.08$) and
shaping from 0.65 to 0.45 (Welch $p{=}0.006$, $d{=}0.75$), so the benefit derives from graph
structure rather than from generic reward densification or feature regularization. Solve
reliability tells a consistent story:
shaping reaches 29/30 seeds (Wilson 95\% CI $[0.83, 0.99]$) and masking 26/30 ($[0.70, 0.95]$)
versus 21/30 for the control ($[0.52, 0.83]$); because DoorKey-5x5 is eventually solved by most
seeds, the mechanisms' advantage shows up primarily as faster learning (AUC) rather than as a
higher asymptotic solve count. Naive state features neither help nor respond to
shuffling, consistent with their being a re-encoding of information the agent already
observes. The collapse is not an artifact of edge count or degree: for the tabular learner,
masking reaches AUC $0.38$ with the correct KG but falls to $0.29$ under endpoint permutation
and to $0.21$ under an explicit degree-preserving edge-swap shuffle, both at or below the
$0.26$ no-KG control (degree-preserving versus correct: $d{=}1.52$, $p{=}0.005$). What matters
is therefore the head-to-tail pairing, not the edge count or the degree sequence.

\begin{figure}[t]
\centering
\includegraphics[width=\columnwidth]{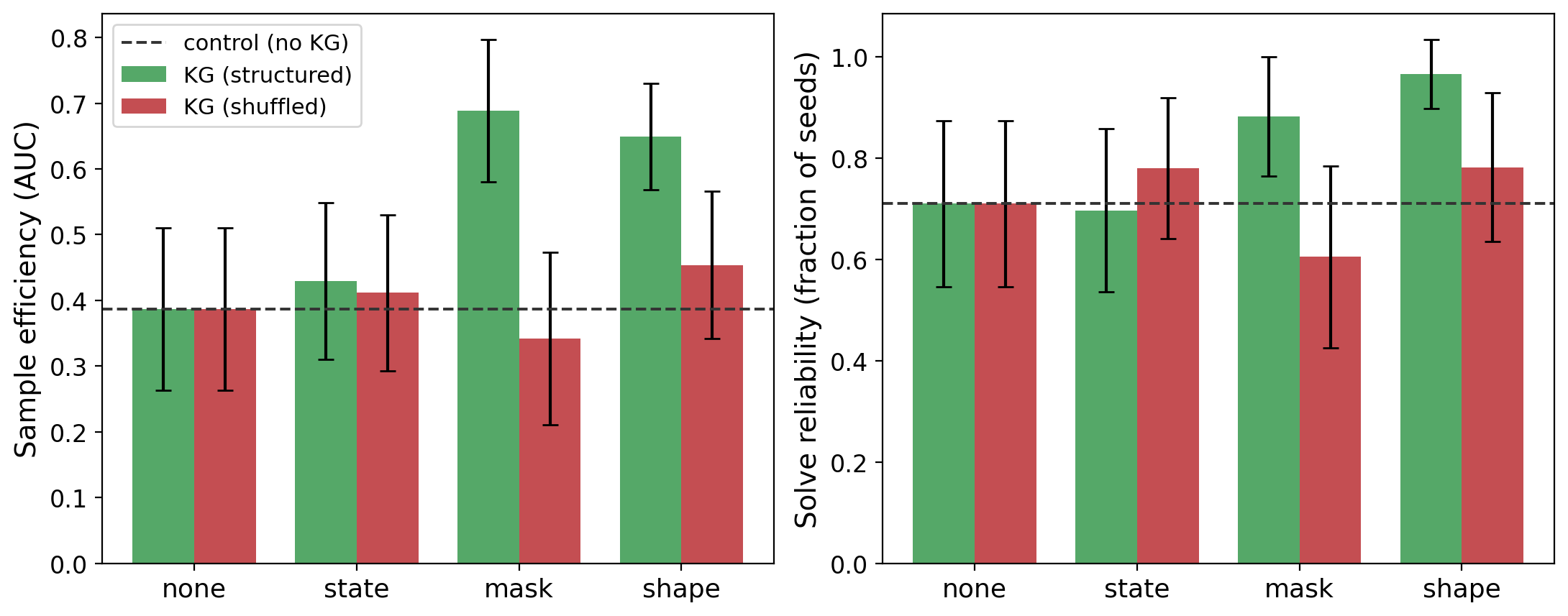}
\caption{Structure-vs-memorization (DoorKey-5x5, thirty seeds, PPO). For \texttt{mask} and
\texttt{shape}, structured-KG bars exceed the no-KG control (dashed line), and shuffling the
KG collapses the benefit toward control, significantly for both \texttt{mask} ($p{=}0.0001$) and
\texttt{shape} ($p{=}0.006$). \texttt{state} shows no effect. Error bars are
95\% $t$-confidence intervals over thirty seeds.}
\label{fig:struct}
\end{figure}

\subsection{KG Value Scales with Knowledge; the Mechanism Decides Safety}
Varying the number of task-critical relations from zero to three yields a three-way
dissociation (Figure~\ref{fig:dose}; fifteen seeds). Soft \textbf{shape} injection shows a
graded dose-response, with AUC rising as more correct critical relations are present (0.36,
0.53, 0.52, 0.66), and when the same relations are corrupted it becomes statistically
indistinguishable from the no-KG control (0.36, 0.36, 0.36, 0.42 against a 0.39 control; pooled
$d{=}{-}0.04$, $p{=}0.87$). In other words shaping benefits from
correct knowledge and harmlessly ignores incorrect knowledge, exactly as the
optimality-preservation property predicts. Hard \textbf{mask} injection behaves oppositely, and
fails in two distinct ways. Given correct but \emph{incomplete} knowledge it degrades steeply
(0.00, 0.27, 0.34, 0.75), attaining its peak only when all three critical relations are present.
Given \emph{corrupted} knowledge it collapses to exactly 0.00 at every dose, meaning the task
becomes unsolvable. A hard constraint built from an imperfect KG
forbids the very actions an optimal policy requires, so a wrong KG is strictly worse than no
KG. \textbf{State} injection shows no dose-response, fluctuating non-monotonically within a
narrow band.

\begin{figure}[t]
\centering
\includegraphics[width=\columnwidth]{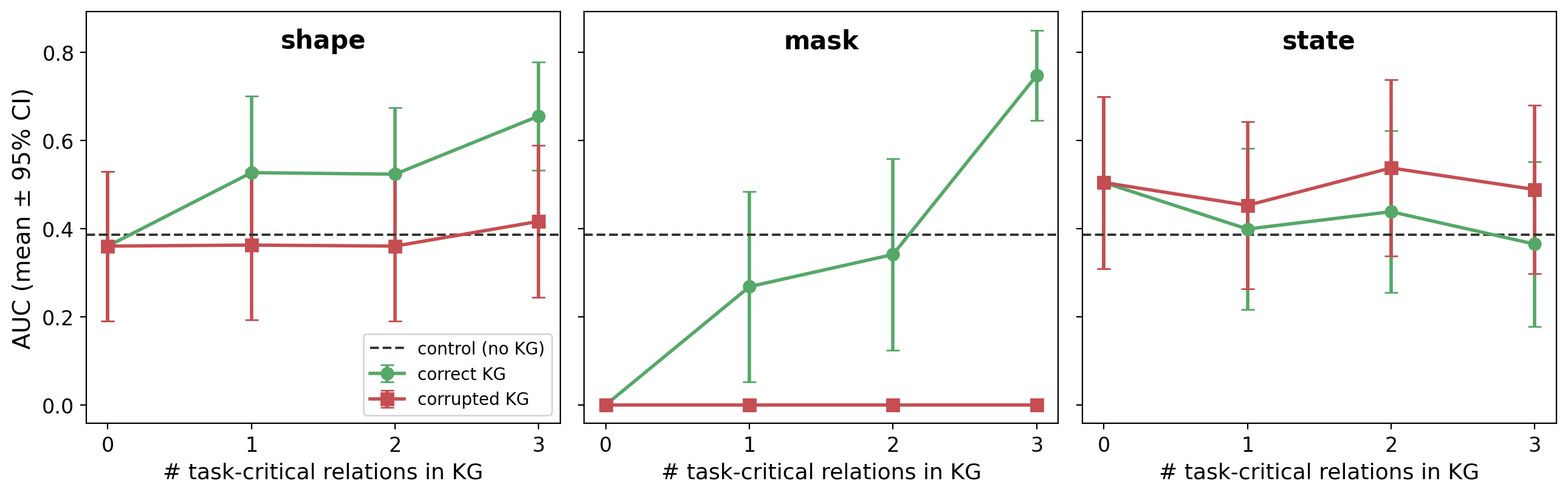}
\caption{Dose-response over the number of task-critical relations in the KG (DoorKey-5x5,
PPO, fifteen seeds). Soft injection (\texttt{shape}) rises with correct knowledge and falls back
to the control level under corruption; hard injection (\texttt{mask}) needs complete and
correct knowledge and breaks otherwise; naive \texttt{state} features do not respond.}
\label{fig:dose}
\end{figure}

To make the masking failure mode explicit, Table~\ref{tab:blockrate} measures how often the mask
forbids \emph{essential} actions over random rollouts. A complete and correct KG prunes about a
quarter of the action space yet never blocks one; as critical relations are removed or
corrupted, it blocks essential actions most of the time, which is exactly what renders the task
unsolvable. Notably, a degraded KG masks \emph{fewer} actions overall but blocks the
\emph{wrong} ones, so the harm comes from misdirected pruning rather than from more pruning.

\begin{table}[t]
\centering
\small
\begin{tabular}{lcc}
\toprule
KG condition & actions & essential \\
             & masked  & blocked   \\
\midrule
complete \& correct ($k{=}3$) & 26.2\% & \textbf{0.0\%} \\
partial ($k{=}2$)             & 21.9\% & 28.6\% \\
partial ($k{=}1$)             & 17.8\% & 53.3\% \\
none ($k{=}0$)                & 14.3\% & \textbf{81.6\%} \\
corrupted ($k{=}3$)           & 13.4\% & \textbf{81.6\%} \\
\bottomrule
\end{tabular}
\caption{Why hard masking is brittle (DoorKey-5x5, seeded random rollouts: 50 seeds
$\times$ 400 steps, 2595 essential opportunities per condition). ``Actions masked'' is the
mean per-step fraction of the action space disabled; ``essential blocked'' is the fraction of
key-pickup and door-toggle opportunities the mask forbids (key pickup when facing the key, door
toggle when facing the door while carrying it).}
\label{tab:blockrate}
\end{table}

\paragraph{Softening the mask trades peak gain for robustness.}
The brittleness above is specific to \emph{hard} masking. A natural mitigation, suggested by
work on graded and shielded constraints, is a \emph{soft} mask that subtracts a penalty from the
logits of KG-irrelevant actions, discouraging rather than forbidding them, so an essential
action remains reachable if the policy insists (Table~\ref{tab:soft}). Soft masking is more
reliable under \emph{partial} knowledge, raising the solve rate from $0.30$ to $0.47$ at one
critical relation and from $0.45$ to $0.67$ at two, precisely because it never renders an
essential action unavailable. The cost is a lower ceiling on a complete correct KG (AUC $0.52$
versus $0.77$), and at our penalty it does not rescue the corrupted case ($0.10$ versus $0.00$).
The penalty is therefore a tunable dial between the hard mask's high-gain, high-risk profile and
a safer, lower-ceiling one.

\begin{table}[t]
\centering
\small
\begin{tabular}{lccccc}
\toprule
& $k{=}0$ & $k{=}1$ & $k{=}2$ & $k{=}3$ & $k{=}3$ corrupt \\
\midrule
hard mask & 0.00 & 0.30 & 0.45 & \textbf{1.00} & 0.00 \\
soft mask & 0.10 & \textbf{0.47} & \textbf{0.67} & 0.89 & 0.10 \\
\bottomrule
\end{tabular}
\caption{Hard versus soft masking (solve rate, DoorKey-5x5, ten seeds, PPO) across the number
of task-critical relations $k$. Soft masking is more reliable under partial knowledge
($k{=}1,2$) but has a lower ceiling at a complete correct KG ($k{=}3$); it exposes a tunable
peak-gain-versus-robustness trade-off.}
\label{tab:soft}
\end{table}

\subsection{The Pattern Generalizes Across Environments}
Table~\ref{tab:multienv} reports AUC across the six-task suite (ten seeds, PPO), and
Figure~\ref{fig:multienv} shows the corresponding solve rates. The result
splits cleanly by task structure. On the two tasks with no exploitable relational structure,
\texttt{Empty-5x5} and \texttt{LavaGapS5}, every injection mode is close to the control (all
AUC between 0.75 and 0.91), so the KG is neutral, as it should be. On the four key/door tasks
the KG helps, and it helps most where the baseline is weakest: on \texttt{DoorKey-6x6},
\texttt{Unlock}, and \texttt{KeyCorridorS3R1} the no-KG control essentially never learns
(AUC $\le 0.02$, solve rate near zero) while masking reaches 0.21 to 0.37 and shaping 0.14 to
0.18. Across all six tasks, action masking is the strongest and most reliable injection,
shaping is second, and naive state features track the control. This supports the central
empirical claim: KGs help RL in proportion to the task-relevant structure the task contains,
and are neutral when there is none to exploit.

\begin{table}[t]
\centering
\small
\begin{tabular}{lcccc}
\toprule
Environment & none & state & mask & shape \\
\midrule
Empty-5x5        & 0.84 & 0.91 & 0.87 & \textbf{0.91} \\
LavaGapS5        & 0.83 & 0.75 & \textbf{0.89} & 0.79 \\
\midrule
DoorKey-5x5      & 0.37 & 0.37 & \textbf{0.77} & 0.66 \\
DoorKey-6x6      & 0.01 & 0.06 & \textbf{0.37} & 0.18 \\
Unlock           & 0.02 & 0.07 & \textbf{0.36} & 0.14 \\
KeyCorridorS3R1  & 0.00 & 0.00 & \textbf{0.21} & 0.18 \\
\bottomrule
\end{tabular}
\caption{Sample efficiency (AUC of the learning curve, mean over ten seeds, PPO) across the
environment suite. Top block: no relational structure, where the KG is neutral. Bottom block:
key/door composition, where the KG helps and most where the baseline fails. Best injection per
row in bold.}
\label{tab:multienv}
\end{table}

\begin{figure}[t]
\centering
\includegraphics[width=0.85\columnwidth]{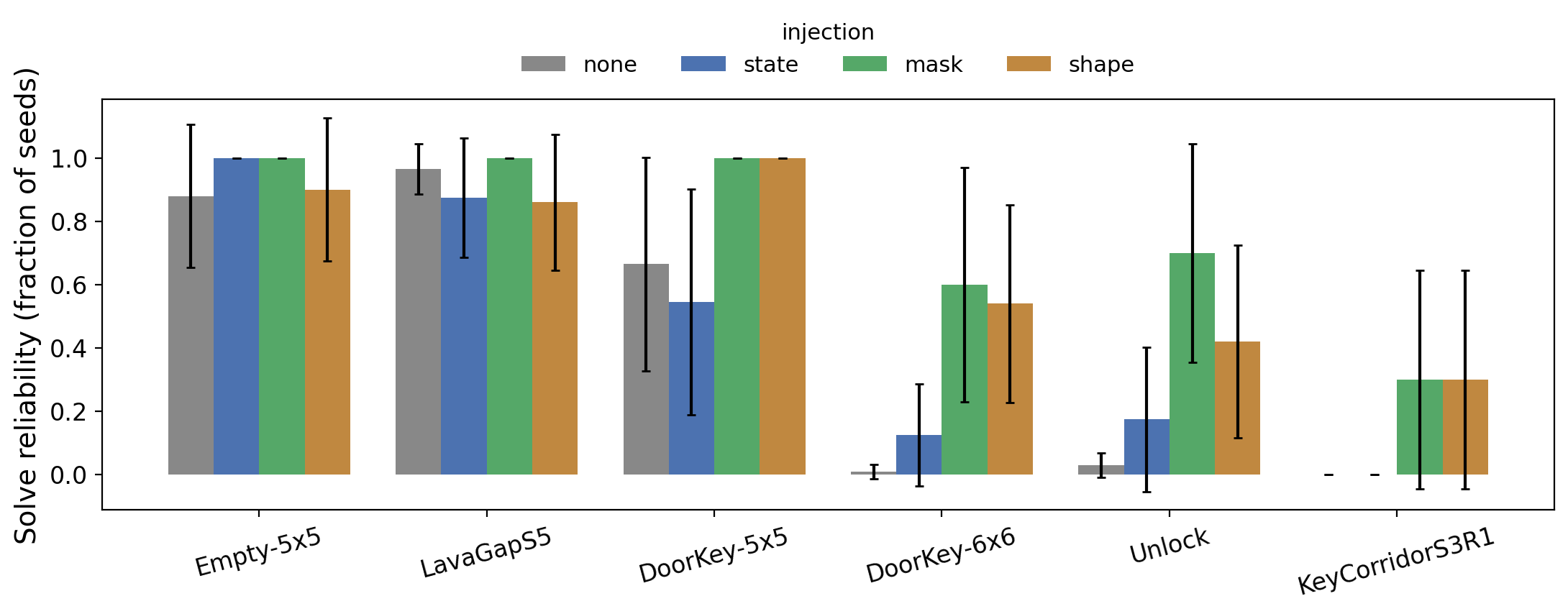}
\caption{Solve reliability across the six-environment suite (fraction of ten seeds reaching
the goal, PPO). Knowledge-graph injection (\texttt{mask}, \texttt{shape}) raises reliability on
the key/door tasks, most where the no-KG control fails, and is neutral on the unstructured
tasks (\texttt{Empty}, \texttt{LavaGap}); naive \texttt{state} features track the control.}
\label{fig:multienv}
\end{figure}

\subsection{Algorithm Generality}
We repeat the structure-vs-memorization study with the tabular Q-learning agent as a
contrasting learner. Action masking helps both learners and is structural in both: for the
tabular agent, masking raises AUC from 0.26 (control) to 0.38 and the solve rate from 0.37 to
0.51, and shuffling the KG collapses the benefit back to 0.29. Reward shaping, by contrast, is
learner-dependent. At the shaping scale tuned for PPO ($0.5$), the same shaping signal that
helps PPO destabilizes tabular Q-learning (AUC 0.13 and solve rate 0.03, well below the 0.26
control), because the dense, view-conditioned shaping reward is large relative to the sparse
task reward and the tabular updates cannot absorb the resulting variance, whereas PPO's
batched, advantage-normalized updates can.

A scale ablation (Table~\ref{tab:scale}) shows that the magnitude matters but does not by itself
explain the failure. Lowering the scale from the PPO-tuned $0.5$ to $0.1$ restores AUC to about
the no-KG control level ($0.32$ versus $0.30$), so most of the degradation at larger scales is a
scale effect. Solve reliability, however, does not recover: it is $0.00$ at every non-zero scale
we tried, against $0.53$ for the control. We therefore find no shaping scale at which this
tabular learner benefits, and we report the learner-dependence of shaping as a genuine
limitation rather than a tuning artifact.

\begin{table}[t]
\centering
\small
\begin{tabular}{lcc}
\toprule
Shaping scale & AUC & solve rate \\
\midrule
0.0 (control)   & 0.30 & \textbf{0.53} \\
0.1             & \textbf{0.32} & 0.00 \\
0.25            & 0.14 & 0.01 \\
0.5 (PPO-tuned) & 0.14 & 0.02 \\
1.0             & 0.13 & 0.02 \\
2.0             & 0.13 & 0.02 \\
\bottomrule
\end{tabular}
\caption{Shaping-scale ablation for the tabular learner (DoorKey-5x5, five seeds). Reducing the
scale recovers the AUC lost at larger scales but does not restore solving: the solve rate is
zero at every non-zero scale. This is a separate five-seed sweep, so its control row is not
directly comparable to the ten-seed tabular control ($0.26$) quoted in the text; read the scale
comparison within the table.}
\label{tab:scale}
\end{table}

The message is therefore that shaping's benefit is learner-dependent in a way masking's is not:
masking helps both learners and is structural in both, whereas shaping helps PPO and, at every
scale we tried, does not help the tabular learner. We also attempted a deep value-based learner
(DQN) as a third algorithm but found
vanilla DQN too unstable on these sparse-reward tasks to serve as an informative comparison,
as it fails even \texttt{Empty-5x5} on a majority of seeds; we report this as a limitation
rather than as a KG result.

\subsection{Does a Structure-Aware Encoder Rescue State Injection?}
A natural objection is that our state features are too naive: because they re-encode the
observation, of course they do not help. We test progressively stronger representations of the
KG through the state channel: a \emph{fixed} graph convolutional encoder over the KG
(Appendix~A) whose node embeddings depend on the graph's multi-hop connectivity, and a
\emph{jointly-trained} GNN whose node features and weights are learned end-to-end with the PPO
policy. On \texttt{DoorKey-5x5} the two encoders form a clean monotone ordering above the
control and naive features: AUC $0.37$ (control) $\approx 0.37$ (state) $< 0.48$ (fixed GNN)
$< 0.585$ (jointly-trained GNN), with solve rate rising from $0.67$ to $0.80$; the
jointly-trained encoder is the strongest state-channel variant, and its benefit appears
structural, since shuffling the KG reduces it from $0.585$ to $0.39$, though at ten seeds this
shuffle contrast is directional rather than significant ($d{\approx}0.57$, $p{\approx}0.22$).
So end-to-end representation learning does appear to extract structure-dependent value from the
state channel that naive features cannot (Cohen's $d \approx 0.70$ versus control).

Yet three caveats keep this from overturning the mechanism story. The improvement over control
is only medium-sized and not significant at ten seeds ($p \approx 0.14$); it does not transfer
to the harder \texttt{DoorKey-6x6} (AUC $0.01$, like the control); and even the jointly-trained
encoder ($0.585$) does not reach reward shaping ($0.66$) or action masking ($0.77$) on the same
task. Representation learning therefore \emph{narrows but does not close} the gap between the
state channel and the shaping or masking channels. The state channel adds observation features
without shaping the objective or constraining the action space, and it is this mechanism-level
limitation, rather than the sophistication of the representation, that bounds its benefit.

\section{Clinical Case Study: Sepsis Management}
We test whether the general-domain dissociation transfers to a real, high-stakes domain:
sequential treatment decisions for sepsis. From MIMIC-IV \citep{johnson2023mimiciv} we build a cohort of 16{,}912 ICU
stays (11{,}996 patients, 29.6\% in-hospital mortality) selected by sepsis diagnosis codes. We
discretize each stay into 4-hour windows with a 20-dimensional state of demographics, vitals,
and laboratory values, a 25-way action over binned intravenous-fluid volume and vasopressor
rate, and a
terminal reward of $+15$ for survival and $-15$ for death. Because exploration on patients is
infeasible, we train a discrete Conservative Q-Learning (CQL) policy \citep{kumar2020cql} offline, and we inject a
UMLS-derived KG \citep{bodenreider2004umls} (59 clinical concepts and 736 relations, anchored to the state features by 63
concept-to-feature mappings) through the state and shape
mechanisms; we do not report the mask mechanism here because a clinically valid
action-indication mask is a substantial undertaking of its own and is left to future work.
Policies are evaluated by Fitted-Q Evaluation (FQE) \citep{le2019batch} of the initial-state value on the true
mortality reward, so that, as in the general-domain study, a shaped-reward policy is scored on
true outcomes rather than on its shaping bonus.

\begin{table}[t]
\centering
\small
\begin{tabular}{lc}
\toprule
Injection & FQE value (5 seeds, 95\% $t$-CI) \\
\midrule
none (control) & $8.83 \pm 0.29$ \\
state          & $8.79 \pm 0.27$ \\
shape          & $8.87 \pm 0.28$ \\
\bottomrule
\end{tabular}
\caption{Off-policy value (FQE) of the learned sepsis policy under each KG-injection
mechanism (MIMIC-IV, five CQL+FQE seeds). With training-seed variance included the three
mechanisms are statistically indistinguishable; the differences are far smaller than the
confidence intervals.}
\label{tab:clinical}
\end{table}

The clinical result (Table~\ref{tab:clinical}) is a careful null: across five training seeds the
three mechanisms' FQE values lie within $0.08$ of one another while the 95\% confidence
intervals span roughly $\pm 0.28$. A single-seed run had suggested that state features lower the
value and shaping raises it, but those apparent effects are smaller than the training-seed
variance and do not survive replication, which is precisely why multi-seed evaluation matters.
We read this as consistent with, rather than contrary to, the general-domain thesis: KG
injection helps when a task exposes relational structure the chosen mechanism can exploit, and a
coarse offline discretization of sepsis management evaluated by FQE may simply not surface such
structure at this scale. A coverage analysis reinforces this: of the five task-critical clinical
concepts our shaping and state mechanisms rely on, only three are connected in the UMLS-derived
KG (tissue hypoxia and organ dysfunction have no retained relations). At this roughly $60\%$
coverage, our general-domain dose-response (Figure~\ref{fig:dose}) would itself predict a weak
or absent effect, so the null plausibly reflects insufficient KG coverage rather than a failure
of KG injection in principle. We therefore present the case study strictly as a methodological
null and not as evidence for or against clinical benefit, in keeping with the well-documented
validity concerns of offline RL in healthcare \citep{gottesman2019guidelines}.

\section{Discussion}
The central practical takeaway is that ``does the KG help?'' is the wrong question; the useful
ones are ``how much task-relevant knowledge does the KG contain?'' and ``how am I injecting
it?'' Our results support three pieces of guidance. First, expect a KG to help in proportion to
the task's relational structure and to the fraction of task-critical relations it actually
encodes; a KG that omits the relations an agent needs provides no benefit however large it is.
Second, prefer soft, optimality-preserving injection (reward shaping) when the KG may be
incomplete or noisy, since it cannot change the optimal policy and a wrong KG is at worst
neutral for a sufficiently robust learner. Third, treat hard constraints (action masking) as
high-reward but high-risk: masking is our single best injection across environments and extracts
the most from a complete correct KG, but an incomplete or corrupted mask can forbid essential
actions and make a wrong KG strictly worse than no KG. Both axes interact with the learner:
masking's benefit transfers across a deep on-policy learner and a tabular one, whereas shaping's
depends on the learner absorbing a dense auxiliary signal.

\paragraph{Limitations.}
Our controlled study uses gridworld tasks; the clinical case study probes transfer to a real
domain, but offline-RL evaluation in healthcare is itself contested and we frame it as a
methodological case study rather than a clinical recommendation. The mask brittleness we report
reflects a fail-open rule that infers irrelevance from partial knowledge, and other masking
semantics may be less brittle; characterizing that design space is future work. Our hardest
tasks are bimodal across seeds, which widens confidence intervals; we mitigate this with
multi-seed runs (up to thirty for the structure test) and by reporting solve rate alongside AUC.
Our state features are deterministic functions of the observation the agent already receives, so
they re-encode rather than add information; ``state does not help'' should therefore be read as
``this simple re-encoding does not help'' rather than as a claim that KG-derived state
augmentation fails in general, and for a tabular learner such features are inert by
construction. We report several two-sample tests without multiple-comparison correction: the
main structure-vs-memorization effects (mask, $p{=}0.0001$; shape, $p{=}0.006$) are below any
reasonable corrected threshold, while smaller secondary contrasts should be read as directional.
Two negative results are scoped to what we varied: no shaping \emph{scale} helps the tabular
learner, though we did not try eligibility traces or annealed soft-mask penalties, and our
corruption perturbs relations independently rather than in correlated, schema-level ways.
Finally, we study three representative mechanisms and two learners; both a
fixed and a jointly-trained GNN-over-KG encoder improve the state channel without making it
competitive with shaping or masking (Section~5), and extending to further algorithm families and
richer mechanisms remains a natural next step.

\section{Conclusion}
We give a controlled account of when knowledge graphs help reinforcement learning, separating
task structure, injection mechanism, and KG quality. KGs help in proportion to the task-relevant
knowledge they carry, the benefit is causally structural rather than a regularization artifact,
and the mechanism decides whether an imperfect KG is harmless or harmful.

%
\bibliography{references}

\clearpage
\appendix
\noindent This appendix provides supplementary material for the main text. Section references
of the form ``Appendix~A'' above refer to the correspondingly lettered sections below.

\section{Agent Hyperparameters}
PPO uses a two-layer multilayer perceptron (128 hidden units, tanh activations) for both actor
and critic, a rollout length of 128 steps, four optimization epochs of four minibatches per
update, a learning rate of $2.5{\times}10^{-4}$, discount $\gamma{=}0.99$, GAE parameter
$\lambda{=}0.95$, clipping coefficient 0.2, entropy coefficient 0.01, value coefficient 0.5,
and gradient-norm clipping at 0.5. The tabular agent uses learning rate 0.1, discount 0.99,
and an $\epsilon$-greedy schedule annealing from 1.0 to 0.05 over the first 60\% of training.
The clinical CQL and FQE models use the library defaults with discount 0.99. Reward shaping
uses a scale of 0.5 with potential weights of 1, 2, and 1 for the carry-key, door-open, and
goal-proximity terms respectively. Both GNN-over-KG encoders use two graph-convolutional
layers with an 8-dimensional embedding over a symmetric normalized adjacency with self-loops.
The fixed encoder draws random node features and layer weights once and holds them constant;
the jointly-trained encoder makes the node features and layer weights learnable parameters of
the actor-critic, updated by the PPO objective, and pools the embeddings of the currently
visible entities into the policy input.

\paragraph{Computing infrastructure.}
All gridworld experiments run on CPU on an Apple M4 Max with 36\,GB of unified memory under
macOS 26.5.2; no GPU is required, and grid cells are parallelized across cores with each worker
pinned to a single thread. The gridworld stack is Python 3.14 with PyTorch 2.13.0, Gymnasium 1.0.0,
MiniGrid 3.1.0, NumPy 2.5.1, SciPy 1.18.0, NetworkX 3.6.1, and Matplotlib 3.11.0. The offline
clinical study runs in a separate Python 3.11 environment with d3rlpy 2.8.1 and the same
PyTorch version, because d3rlpy does not yet ship wheels for the newer interpreter.

\section{Clinical Cohort and Features}
The cohort comprises adult ICU stays whose hospital admission carries a sepsis ICD-9 or ICD-10
code. The state vector concatenates demographics (age, sex) with per-window means of heart
rate, systolic, diastolic and mean arterial pressure, respiratory rate, temperature, oxygen
saturation, and Glasgow Coma Scale, together with lactate, creatinine, blood urea nitrogen,
white-cell count, platelets, bilirubin, sodium, potassium, pH, and glucose; missing values are
forward-filled within a stay and then median-imputed. Actions bin total intravenous-fluid
volume and maximum vasopressor rate per window into a 5-by-5 grid. The UMLS KG is restricted to
relations among the anchored clinical concepts and connects, for example, acute kidney injury
to creatinine and blood urea nitrogen, tissue hypoxia to lactate and oxygen saturation, and
hypotension to the blood-pressure features.

\paragraph{Per-concept KG coverage.}
Both the \texttt{state} and \texttt{shape} mechanisms consume the KG through five task-critical
clinical concepts, each mapped to a group of state features; a concept contributes signal only
if the retained KG actually connects it. Table~\ref{tab:coverage} reports, for each concept, the
features it pools and its degree in the extracted 59-node, 736-edge graph. Three of the five
concepts are connected and two are isolated, so the mechanisms operate at roughly $60\%$
coverage of the knowledge they were designed to use. Under the general-domain dose-response
(main paper, Figure~2) that level of coverage predicts a weak or absent effect, which is the
basis for reading the clinical result as coverage-limited rather than as evidence against KG
injection.

\begin{table}[h]
\centering
\small
\begin{tabular}{llc}
\toprule
Clinical concept & features pooled & degree \\
\midrule
Acute kidney injury & creatinine, BUN           & 62 \\
Hypotension         & SBP, DBP, MAP             & 128 \\
Sepsis              & WBC, temp., HR, RR        & 100 \\
Tissue hypoxia      & lactate, SpO$_2$          & \textbf{0} \\
Organ dysfunction   & GCS, bilirubin, platelets & \textbf{0} \\
\bottomrule
\end{tabular}
\caption{Coverage of the five task-critical clinical concepts in the UMLS-derived KG. A degree
of zero means the concept survives anchoring but retains no relations, so its channel is
identically zero and contributes no signal.}
\label{tab:coverage}
\end{table}

\section{Shaping Potential and Goal Proximity}
The potential $\Phi$ is a sum of three non-negative terms, each gated by a KG relation so that
credit is assigned only when the graph actually encodes the corresponding prior. Crucially,
every quantity $\Phi$ reads is a function of the agent's \emph{egocentric partial observation}
(the native MiniGrid $7\times7\times3$ view) or of the abstract KG; $\Phi$ never accesses the
global grid, the agent's absolute coordinates, or the goal's location. The pseudocode is:

\begin{small}
\begin{verbatim}
Phi(obs, KG):  # obs = egocentric 7x7 view
  phi = 0
  if carrying_key() and
     KG.has(agent, can_pick, key):
    phi += w_key
  if door_open_in_view(obs) and
     KG.has(key, opens, door):
    phi += w_door
  if goal_in_view(obs) and
     KG.path_len(door, goal) is not None:
    phi += w_goal
  return phi
\end{verbatim}
\end{small}

\noindent The goal-proximity indicator \texttt{goal\_in\_view} is a Boolean: true only when a
goal object already falls inside the agent's local field of view, information it receives in its
observation regardless of shaping. The KG term \texttt{path\_len(door,goal)} is a hop count over
the \emph{abstract} graph, not over grid cells, so it conveys ``the KG believes a door leads to
a goal,'' not where the goal is. Consequently $\Phi$ cannot leak privileged map knowledge: an
agent with a correct KG but no view of a goal, or a view of a goal but a KG lacking the
door--goal link, receives no goal-term credit. At episode termination or truncation we set
$\Phi=0$, and we always evaluate on the unshaped environment reward.

\paragraph{Does observation-conditioned shaping distort the terminal policy?}
Because $\Phi$ reads the observation rather than the latent Markov state, the invariance of
Proposition~1 of the main paper holds over the observation process, and in principle the optimal
policy could be perturbed. We find no empirical sign of this. If the potential were pushing the agent toward a
suboptimal fixed point, the clearest symptom would be degraded \emph{asymptotic} performance,
yet shaping raises solve reliability to 29/30 seeds against 21/30 for the no-KG control, and
under corruption it tracks that control rather than dropping meaningfully below it ($0.375$
pooled against $0.387$; $d{=}{-}0.04$, $p{=}0.87$). The mechanism's effects therefore
appear as changes in learning speed, not as a shifted optimum, though this is empirical evidence
on these tasks rather than a guarantee.

\section{Shuffle-Control Implementation}
Both variants operate on the retained triples $(h, r, t)$ after the completeness, noise, and
size knobs are applied, with the relation label $r$ bound to its slot, so relation \emph{type}
and edge \emph{direction} are preserved and only endpoints are rewired. Endpoint permutation
independently permutes the head and tail lists and re-pairs them with relations held fixed,
preserving the edge count and the head and tail multisets while destroying the head-to-tail
pairing. The degree-preserving variant instead performs $10|E|$ random tail-swaps between edge
pairs, leaving every head's out-degree and every tail's in-degree invariant yet still
randomizing which head connects to which tail. Both draw from a \texttt{random.Random(seed)}
instance seeded by the run's seed, so every (condition, seed) cell reproduces its shuffled graph
exactly.

\paragraph{Corruption process.}
The \texttt{corrupt\_critical} knob applies only to the $k$ retained critical triples. For each
such triple $(h,r,t)$ we keep the head and the relation label and replace the tail with an
entity drawn uniformly from the MiniGrid vocabulary (objects, colors, and states) excluding the
true tail, giving a well-formed but factually wrong relation such as
\textsf{(key, opens, green)}. Corruptions are drawn \emph{independently} per triple from the
same seeded generator, so the errors are neither correlated across relations nor structured at
the schema level. Real knowledge graphs plausibly fail in more clustered ways, for example
losing every relation of one type or every relation incident to one entity, and our corruption
model does not capture those; we therefore read the corruption results as characterizing
independent factual error rather than KG error in general.

\section{Reproducibility Statement}
All experiments are driven by declarative configuration files and a resumable grid runner, and
every run is seeded across Python, NumPy, and PyTorch. The synthetic KG generator, the three
injection wrappers, the agents, and the analysis code that produces every figure and table are
released. The clinical case study depends on the credentialed MIMIC-IV and UMLS releases,
which we do not redistribute; we release the cohort-construction, KG-extraction, and offline-RL
code together with the exact table and field identifiers used.

\end{document}